\documentclass[journal,twoside,web]{ieeecolor}
\usepackage{generic}
\usepackage{cite}
\usepackage{amsmath,amssymb,amsfonts}
\usepackage[amsmath,thmmarks]{ntheorem}
\usepackage{graphicx}
\usepackage{booktabs}
\usepackage{url}
\usepackage{hyperref}
\hypersetup{hidelinks=true}
\usepackage{textcomp}

\newtheorem{proposition}{Proposition}
\newtheorem{corollary}{Corollary}
\theoremstyle{plain}
\newtheorem{remark}{Remark}

\begin{document}

\title{Loss Invariance Determines What Concept Layers Encode: Volume Grounding in Echocardiography}

\author{Hyunkyung Han and Min Jung Kim
\thanks{Manuscript submitted \today.
This work was supported by Institute of Information \& communications Technology
Planning \& Evaluation (IITP) grant funded by the Korea government(MSIT)
(\mbox{RS-2026-25586094}).}
\thanks{H. Han is with the Department of Integrative Medicine, Yonsei University
College of Medicine, Seoul 06273, Republic of Korea
(e-mail: sthan1@yonsei.ac.kr).}
\thanks{M. J. Kim is with the Department of Integrative Medicine, Yonsei
University College of Medicine, Seoul 06273, and with the Department of
Radiology, Research Institute of Radiologic Science, Yonsei University College of
Medicine, Seoul 03722, Republic of Korea (e-mail: MINES@yuhs.ac).}
\thanks{ORCID: H. Han, 0009-0006-2672-384X; M. J. Kim, 0000-0003-4949-1237.}
\thanks{\textit{(Corresponding author: Min Jung Kim.)}}}

\maketitle

\begin{abstract}
\textit{Objective:} Concept bottleneck models route prediction through
interpretable intermediate variables, and their validity is normally judged by
how accurately those variables are predicted. We ask whether that judgement is
sufficient, using left ventricular volumes as the concepts underlying ejection
fraction estimation from echocardiographic video.
\textit{Methods:} A video transformer encoder was trained on a publicly
available echocardiography dataset. End-systolic and end-diastolic volumes
formed a concept layer from which ejection fraction was computed analytically,
with no residual path to the output. We compared training under an ejection
fraction objective alone against training with additional supervision of the
volumes in millilitres, and evaluated both on 1276 held-out studies.
\textit{Results:} The concept bottleneck did not increase ejection fraction
error relative to direct regression, at 6.89 against 7.13 mean absolute error.
Without volume supervision, however, the spread of predicted volumes collapsed
to 0.1 millilitres against reference spreads of 35.7 and 45.7 millilitres, while
correlation was partly preserved. We show that this follows from an invariance
property of the objective: ejection fraction is a ratio and is unchanged when
both volumes are rescaled, so the loss determines the concept layer only up to
scale. Supervision in absolute units reduced volume error from 89.8 to 25.8
millilitres at a cost of 0.4 in ejection fraction error.
\textit{Conclusion:} Concept accuracy alone can conceal a concept layer that
carries no physical scale.
\textit{Significance:} Interpretable intermediate variables in clinical models
should be validated against the invariance structure of the training objective,
not only against prediction accuracy.
\end{abstract}

\begin{IEEEkeywords}
Concept bottleneck models, echocardiography, ejection fraction, interpretability, invariance, medical image analysis.
\end{IEEEkeywords}

\section{Introduction}
\label{sec:introduction}
\IEEEPARstart{C}{oncept} bottleneck models (CBMs) route a prediction through an
intermediate layer of human-interpretable quantities, so that the model's output
can be read as a function of variables a clinician already uses~\cite{koh2020}.
In medical imaging this is an attractive contract: the intermediate layer is not
a post-hoc explanation but a load-bearing part of the computation, and it can in
principle be audited, corrected, or overridden at inspection time.

The contract holds only if the concept layer means what its labels say. In
practice, validity is assessed by how accurately the layer predicts the
concepts, typically as an error or correlation against reference annotations. We
argue that this measure is not sufficient, and more specifically that what a
concept layer \emph{can} represent is fixed in advance by the invariance
structure of the training objective, independently of how well it is optimized.

Echocardiographic ejection fraction (EF) makes the point concrete. EF is defined
as a ratio of end-diastolic and end-systolic volumes and is therefore unchanged
if both volumes are multiplied by a common factor. An objective built on EF
alone places no requirement on the concept layer to express volumes in physical
units. This is not a technicality for cardiology: end-diastolic volume is itself
a clinical variable, used to assess ventricular dilation and remodeling, and a
concept layer that yields a correct EF while reporting volumes with no physical
scale would mislead precisely the reader for whom the bottleneck was built.

Related work has approached information loss in concept layers empirically.
Concept leakage describes unintended information passing through concept
activations~\cite{mahinpei2021,margeloiu2021,havasi2022}, and concept
incompleteness has motivated architectures that add residual pathways or
auxiliary concepts to recover what the layer omits~\cite{pcbm2023,rescbm2024};
a recent survey reviews both lines~\cite{knab2026}. In medical imaging,
concept bottlenecks have been built on radiological
lexicons~\cite{bunnell2024}, and the trustworthiness of concept evidence has
been examined in terms of where activations arise rather than what they
encode~\cite{huang2024,sgcbm2026}, in line with broader calls for structured
validation of clinical models~\cite{kondylakis2026}. Closest to our position is
work on reasoning shortcuts in neuro-symbolic predictors, which studies concepts
that satisfy the training objective while encoding unintended
semantics~\cite{marconato2025}, and which derives conditions under which the
concepts and the inference layer are identifiable, with intended semantics
defined up to a permutation and an element-wise invertible
transformation~\cite{bortolotti2025}. Two things separate that setting from
ours. Their analysis concerns discrete concepts and a finite label set, and the
multiplicity it counts is combinatorial; the degeneracy we describe is a
continuous one-parameter family arising from a ratio-valued target. More
importantly, an element-wise invertible transformation is there admitted as part
of the intended semantics, whereas for a concept whose meaning is a physical
quantity it is precisely the failure: a volume reported in arbitrary units is
correct up to an invertible rescaling and clinically unusable. The standard
equivalence relation is therefore too coarse for quantitative clinical concepts.
These lines treat the
omission as a deficiency to be patched, or the misplacement of evidence as the
thing to be corrected. Our position is different: we show that \emph{which}
properties can be omitted is determined by the invariance group of the loss, so
that the appropriate remedy is not added capacity but supervision that breaks
the invariance.

This work makes three contributions.
\begin{enumerate}
\item We show empirically that a volume concept layer trained under an EF
objective retains part of the ordering among studies while losing physical scale
almost entirely, with predicted volume spread collapsing to well under one
percent of the reference spread.
\item We formalize the cause as an invariance property of the objective
(Proposition~\ref{prop:invariance}), which predicts that no
invariance-preserving regularizer can recover scale. We report a design of our
own that fell into exactly this failure and was not detectable from concept
accuracy.
\item We verify the prescription that follows---supervision in absolute
units---and quantify its cost, which is an increase of 0.4 in EF mean absolute
error, together with an optimization instability it introduces and a practical
remedy.
\end{enumerate}

The encoder used here was introduced in our earlier study of attribution
faithfulness in echocardiographic video~\cite{p1}; the research question and the
contributions of the present work are distinct.

\section{Methods}

\subsection{Data}
We use the EchoNet-Dynamic dataset~\cite{ouyang2020}, publicly distributed by
Stanford University at \url{https://echonet.github.io/dynamic/}.
All data are publicly available; no institutional review board approval was
required for this secondary analysis of a de-identified public dataset.

We adopt the standard partition of 7,465 training, 1,288 validation and 1,277
test videos; one test video is excluded by the loader, and volume annotations
are unavailable for a further six studies (five in training, one in test),
giving 7,460 / 1,288 / 1,276 evaluable cases. Reference volumes were
$43.7 \pm 35.7$~mL (end-systolic, ESV) and $91.6 \pm 45.7$~mL
(end-diastolic, EDV).

\subsection{Clip sampling and encoder}
Each study contributes a single clip of 32 frames taken with a stride of two,
spanning 62 consecutive frames of the source recording. At the acquisition rate
of this dataset that span covers a complete beat for essentially every study,
whereas a stride of one would truncate the beat in a substantial fraction of
them. The distinction matters more here than it would for a direct regressor:
the two concepts are read off the extremes of a per-frame trace, so a clip that
omits either extreme leaves the concept layer unmeasurable rather than merely
noisy. The same sampling is applied in training, validation and test.

The backbone is a VideoMAE encoder~\cite{p1} operating on $112 \times 112$
frames with $16 \times 16$ patches and a tubelet depth of two. Masked
autoencoding pre-training ran for 200 epochs with 60\% of tubelets hidden and
batches of 256; ejection fraction fine-tuning then ran for 50 epochs under AdamW
with a learning rate of $10^{-4}$, weight decay 0.05 and batches of 32.
Validation loss selected epoch 43, and that one checkpoint initialized every
condition below, so the comparisons that follow differ only in the objective.

\subsection{Concept bottleneck}
The concept layer consists of two clinically defined quantities, end-systolic
and end-diastolic volume, from which ejection fraction follows analytically as
\begin{equation}
\widehat{\mathrm{EF}} = 100 \cdot
\frac{\hat{V}_{\mathrm{ED}} - \hat{V}_{\mathrm{ES}}}{\hat{V}_{\mathrm{ED}}}.
\label{eq:ef}
\end{equation}
Prediction is routed entirely through the concept layer: no residual path
connects the encoder to the EF output, so any information the concept layer
fails to carry is unavailable downstream.

The encoder emits $T=16$ temporal tokens per study (32 sampled frames at a
tubelet size of 2).
A head shared across tokens maps each token embedding to a scalar volume:
layer normalization, a linear map $768 \to 384$, a GELU nonlinearity, a linear
map $384 \to 1$, and a softplus that constrains the output to be positive. No
pooling precedes the head, so the resulting per-token volume trace
$\hat{v}_1, \dots, \hat{v}_T$ retains full temporal resolution. The two concepts
are taken as $\hat{V}_{\mathrm{ED}} = \max_t \hat{v}_t$ and
$\hat{V}_{\mathrm{ES}} = \min_t \hat{v}_t$.
The model therefore selects the cardiac phase itself rather than being given
annotated frames. This is why the temporal assumption stated in
Section~\ref{sec:temporal} must be verified rather than assumed: unless the
maximum and minimum fall in the diastolic and systolic portions of the clip
respectively, the two outputs are not the concepts they are named after.
The encoder was fine-tuned jointly with the concept head in all conditions. The
positivity constraint is worth noting for what follows: when the scale of the
concept layer is unconstrained, the head cannot collapse to an arbitrary
constant but only toward small positive values, which is what we observe in
Section~\ref{sec:results}.

The training objective is
\begin{equation}
\mathcal{L} = \mathcal{L}_{\mathrm{EF}} + \mu\,\mathcal{L}_{\mathrm{vol}},
\label{eq:loss}
\end{equation}
where $\mathcal{L}_{\mathrm{EF}}$ is the $L_1$ error of \eqref{eq:ef} and
$\mathcal{L}_{\mathrm{vol}}$ is the $L_1$ error of both volumes in millilitres.
Setting $\mu = 0$ gives the EF-only condition.

\subsubsection{A degenerate concept loss}
An earlier version of this design regularized the concept layer toward the
correct volume \emph{ratio} rather than the correct volumes. Writing
$\hat{\rho} = \hat{V}_{\mathrm{ES}} / \hat{V}_{\mathrm{ED}}$ for the predicted
ratio, that term penalized $|\hat{\rho} - \rho|$ with the target
$\rho = 1 - \mathrm{EF}$, which is the identity obtained by rearranging
\eqref{eq:ef}. The difficulty is that the same rearrangement applies to the
prediction: $\hat{\rho} = 1 - \widehat{\mathrm{EF}}$ holds identically, so
\begin{equation}
|\hat{\rho} - \rho|
= \bigl|(1-\widehat{\mathrm{EF}}) - (1-\mathrm{EF})\bigr|
= \bigl|\widehat{\mathrm{EF}} - \mathrm{EF}\bigr| ,
\label{eq:degenerate}
\end{equation}
with ejection fraction expressed as a fraction of unity. The regularizer was
therefore not merely similar to $\mathcal{L}_{\mathrm{EF}}$ but \emph{equal} to
it, contributing a constant multiple of the same gradient. It is invariant under
the scale group $G$ and supplies no direction not already present in the ejection
fraction objective, which is exactly the situation described in
Remark~\ref{rem:invariant}.

We report this because the failure is not visible from concept accuracy alone:
the normalized concept error was low throughout training while the absolute
scale of the concept layer was never constrained at any point. An evaluation
protocol that scores concepts in normalized units would have reported the model
as well grounded. The normalized condition reported in
Section~\ref{sec:results} is a different construction: it rescales the concept
loss by a batch statistic but retains absolute magnitude information, and does
restore grounding.

\subsection{Volume grounding}
Concept heads were additionally supervised in physical units (mL) with weight
$\mu \in \{3, 10\}$. A normalized variant, in which the concept loss is scaled
by the batch mean volume, was included as a control.

\subsection{Verification of the temporal assumption}
\label{sec:temporal}
The concept heads must attend to the correct phase of the cardiac cycle for any
subsequent claim about scale to be meaningful. Across the test set, the EDV head
placed its maximum within the first five temporal tokens in 97.5\% of studies,
the ESV head placed its minimum within the last eleven tokens in 99\% of
studies, and the maximum preceded the minimum in 97.7\% of studies. Failures of
scale reported below are therefore not attributable to temporal misalignment.

\subsection{Evaluation}
We report mean absolute error (MAE) with bootstrap 95\% confidence intervals and
Pearson correlation on the 1,276 test studies. For the concept layer we
additionally report the standard deviation of predicted volumes, which is the
quantity of interest for scale. A constant predictor attains an EF MAE of 9.097
and serves as a lower reference.

\section{Theory}

\begin{proposition}
\label{prop:invariance}
Let the output map $g$ be invariant under a group $G$ acting on the concept
space, i.e. $g(\tau(c)) = g(c)$ for all $\tau \in G$ and all $c$. Then the
training loss determines the concept layer only up to a $G$-orbit.
\end{proposition}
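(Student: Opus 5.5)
The plan is to make the informal phrase ``determines the concept layer only up to a $G$-orbit'' precise, and then show it follows directly from invariance. Write the model as $f = g \circ h$, where $h$ is the concept map (encoder plus concept head) and $g$ is the fixed analytic output map, here \eqref{eq:ef}. With $\mu = 0$, the training loss depends on $h$ only through the outputs $g(h(x_i))$ on the training inputs. So the loss has the form $\mathcal{L}(h) = \sum_i \ell\bigl(g(h(x_i)), y_i\bigr)$. The claim to prove is then: for any $\tau \in G$, the composed concept map $\tau \circ h$ attains exactly the same loss value as $h$. Consequently, any minimizer, or any level set, of $\mathcal{L}$ is a union of $G$-orbits in concept-map space, and nothing in $\mathcal{L}$ singles out one element of an orbit.

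The key steps, in order, are these.

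First, I fix $\tau \in G$ and compute pointwise: $g(\tau(h(x_i))) = g(h(x_i))$ by the hypothesis. Hence every summand, and so $\mathcal{L}$ itself, is unchanged. This is a single line.

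Second, I state the consequence for gradients. If $G$ is a Lie group, as the scale group $\{c \mapsto \lambda c : \lambda > 0\}$ acting on $(\hat V_{\mathrm{ES}}, \hat V_{\mathrm{ED}})$ is, I differentiate $\lambda \mapsto \mathcal{L}(\lambda h)$ at $\lambda = 1$. This shows that the derivative of the loss along the orbit direction vanishes identically. So gradient-based training receives no signal that moves the concept layer along the orbit, which is the content behind ``determines only up to''.

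Third, I would verify the hypothesis in the concrete case. In \eqref{eq:ef}, $\lambda$ cancels from numerator and denominator, and positivity from the softplus keeps $\hat V_{\mathrm{ED}} > 0$, so $g$ is defined along the whole orbit. I would also record the corollary anticipated in Remark~\ref{rem:invariant}: any added regularizer $R$ with $R(\tau \circ h) = R(h)$ leaves $\mathcal{L} + R$ $G$-invariant, so the argument applies again unchanged. The ratio regularizer of \eqref{eq:degenerate} is one instance.

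The main obstacle is not the algebra but stating the conclusion honestly. Whether $\tau \circ h$ is actually realizable within the parametrized function class matters only for attainability, not for the loss identity. For the scale group, realizability holds approximately, since the final linear map before the softplus can absorb a rescaling only up to softplus nonlinearity. I would therefore phrase the proposition at the level of concept maps, where it is exact. I would also note that it asserts non-identifiability, namely that the loss cannot distinguish orbit elements, rather than predicting which orbit element training selects. The observed collapse toward small values is then attributed to implicit biases such as weight decay and the softplus, not to the proposition.
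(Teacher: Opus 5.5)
Your proof is correct and follows the paper's argument: the paper's sketch also notes that $g(\tau(c)) = g(c)$ gives identical predictions and hence identical loss, so minimization cannot separate members of an orbit, which is your first step applied to $\tau \circ h$. The rest is sound elaboration that the paper leaves implicit: the level-set formulation, the vanishing derivative along the scale orbit, and the caveat that exact realizability of $\tau \circ h$ under the softplus head matters only for attainability.
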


\noindent\textit{Proof sketch:}
For any concept representation $c$ and any $\tau \in G$, the predictions
$g(c)$ and $g(\tau(c))$ coincide, hence the two representations incur identical
loss. Loss minimization therefore cannot distinguish members of the same orbit.

\begin{corollary}
EF depends on ESV and EDV only through their ratio and is thus invariant under
the scale group $G = \{c \mapsto \lambda c : \lambda > 0\}$. An EF objective
alone does not determine the absolute scale of the concept layer; supervision in
physical units is necessary.
\end{corollary}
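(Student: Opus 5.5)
The Corollary follows by checking the hypothesis of Proposition~\ref{prop:invariance} for the map \eqref{eq:ef}, and then making the conclusion concrete for the architecture of Section~2. The plan has three steps.

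\textbf{Step 1: invariance of the output map.} Write the output map as
\[
g(V_{\mathrm{ES}},V_{\mathrm{ED}}) = 100\,(1 - V_{\mathrm{ES}}/V_{\mathrm{ED}})
\]
on the positive quadrant. Here $V_{\mathrm{ED}}>0$ is guaranteed by the softplus. For $\tau_\lambda(c)=\lambda c$ with $\lambda>0$, the factor $\lambda$ cancels in the ratio, so $g(\tau_\lambda c)=g(c)$. Also $\tau_\lambda \circ \tau_{\lambda'} = \tau_{\lambda\lambda'}$ and $\tau_\lambda^{-1}=\tau_{1/\lambda}$, so $G$ is a group acting on the concept space and preserving positivity. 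This verifies the hypothesis of Proposition~\ref{prop:invariance}.

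\textbf{Step 2: the loss is constant on orbits.} Applying the Proposition, $\mathcal{L}_{\mathrm{EF}}$ with $\mu=0$ in \eqref{eq:loss} takes the same value on every element of the orbit $\{\lambda c\}$. To tie this to the network rather than to abstract concept vectors, I would argue at the level of the per-token trace. Rescaling the whole trace $\hat v_t \mapsto \lambda \hat v_t$ commutes with $\max_t$ and $\min_t$, because $\lambda>0$ preserves order. Hence it rescales $(\hat V_{\mathrm{ES}},\hat V_{\mathrm{ED}})$ by $\lambda$ and leaves every per-study EF prediction, and so the empirical loss, unchanged.

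One more point is needed to call this realizable degeneracy rather than an abstract orbit: the rescaled trace must actually be produced by some choice of head parameters. Softplus is not positively homogeneous, so scaling the final linear weights does not exactly scale the output. I would sidestep this by stating the Corollary at the level of the concept map: the loss functional on concept-valued predictors is $G$-invariant. Consequently, any predictor and all of its rescalings are equally optimal, and the loss carries no information selecting $\lambda$. I expect this to be the main obstacle, namely stating precisely what ``determines up to scale'' means without overclaiming about parameter space. Arguing at the level of functions avoids the softplus issue.

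\textbf{Step 3: necessity of supervision in physical units.} Any additional term that is itself $G$-invariant leaves the whole objective $G$-invariant, so the Proposition still applies and scale stays undetermined. The ratio regularizer of \eqref{eq:degenerate} is an instance of this. So scale can only be fixed by a term that is not invariant under $\tau_\lambda$. The millilitre loss $\mathcal{L}_{\mathrm{vol}}$ is such a term: $\sum|\lambda\hat V - V|$ is a non-constant, convex function of $\lambda$, with its minimum near the true scale. Adding it therefore breaks the orbit degeneracy. I would phrase ``necessary'' as ``necessary that some non-$G$-invariant term, such as absolute-unit supervision, be present,'' which is what the argument supports.
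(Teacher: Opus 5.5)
Your proposal is correct and follows the paper's own route. EF depends on $\hat V_{\mathrm{ES}}/\hat V_{\mathrm{ED}}$ alone, so Proposition~\ref{prop:invariance} makes the EF loss constant on scale orbits, and the reasoning of Remark~\ref{rem:invariant}, with the ratio regularizer of \eqref{eq:degenerate} as its instance, leaves only a non-invariant term such as the millilitre loss able to select a scale. Your additions tighten what the paper's sketch leaves implicit rather than change the argument: checking that positive rescaling of the per-token trace commutes with $\max_t$ and $\min_t$, resolving the softplus realizability issue by arguing over concept-valued predictors, and reading ``necessary'' as ``some non-$G$-invariant term is necessary''.
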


The proposition itself is elementary and belongs to the same family as
identifiability results in representation learning, where a model class is shown
to be determined only up to an equivalence class of
transformations~\cite{locatello2019}. Our claim is not that the argument is new
mathematics, but that it has not been used to constrain supervision design in
concept bottlenecks, where concept validity is instead assessed by prediction
accuracy against annotations.

\begin{remark}
\label{rem:invariant}
A regularizer that is itself $G$-invariant cannot narrow the orbit and amounts
to a reparameterization.
\end{remark}

\section{Results}
\label{sec:results}

\subsection{The concept bottleneck does not cost EF accuracy}
Table~\ref{tab:ef} reports EF error. The concept bottleneck did not increase EF
error relative to direct regression (6.891 vs.\ 7.129). The confidence intervals
overlap, so we do not claim superiority; the result is nonetheless in the
opposite direction from the common expectation that routing a prediction through
a low-dimensional interpretable layer incurs an accuracy penalty. The supervised
video baseline R(2+1)D-18 is included as an upper reference and is not a
comparison target for this study.

\begin{table}[t]
\caption{Ejection Fraction Error on the Test Set ($n=1{,}276$)}
\label{tab:ef}
\centering
\setlength{\tabcolsep}{4pt}
\begin{tabular}{lccc}
\toprule
Model & MAE & 95\% CI & $r$ \\
\midrule
R(2+1)D-18 (reference) & 4.814 & --- & 0.848 \\
\textbf{CBM, EF only} & \textbf{6.891} & [6.502, 7.269] & 0.623 \\
Direct regression & 7.129 & [6.735, 7.531] & 0.581 \\
CBM, grounded $\mu=3$ & 7.295 & [6.906, 7.702] & 0.580 \\
CBM, normalized & 7.574 & [7.169, 7.983] & 0.544 \\
CBM, grounded $\mu=10$ & 7.698 & [7.287, 8.113] & 0.510 \\
Constant predictor & 9.097 & --- & --- \\
\bottomrule
\end{tabular}
\end{table}

\subsection{The concept layer loses physical scale}
Table~\ref{tab:concept} reports the concept layer. Without volume supervision,
the standard deviation of predicted volumes collapsed to 0.1~mL against
reference standard deviations of 35.7 and 45.7~mL. The EDV correlation was
$-0.219$; at this predicted spread the correlation is numerically unstable and
we treat its sign as uninformative rather than as evidence of anti-correlation.
The ESV correlation remained positive at 0.626. The concept layer therefore
retains part of the ordering while losing the unit, which is precisely the
behaviour predicted by Proposition~\ref{prop:invariance}. Fig.~\ref{fig:scatter}
shows the effect directly: the ungrounded predictions form a horizontal band near
zero regardless of the reference volume.

\begin{figure*}[t]
\centering
\includegraphics[width=0.72\linewidth]{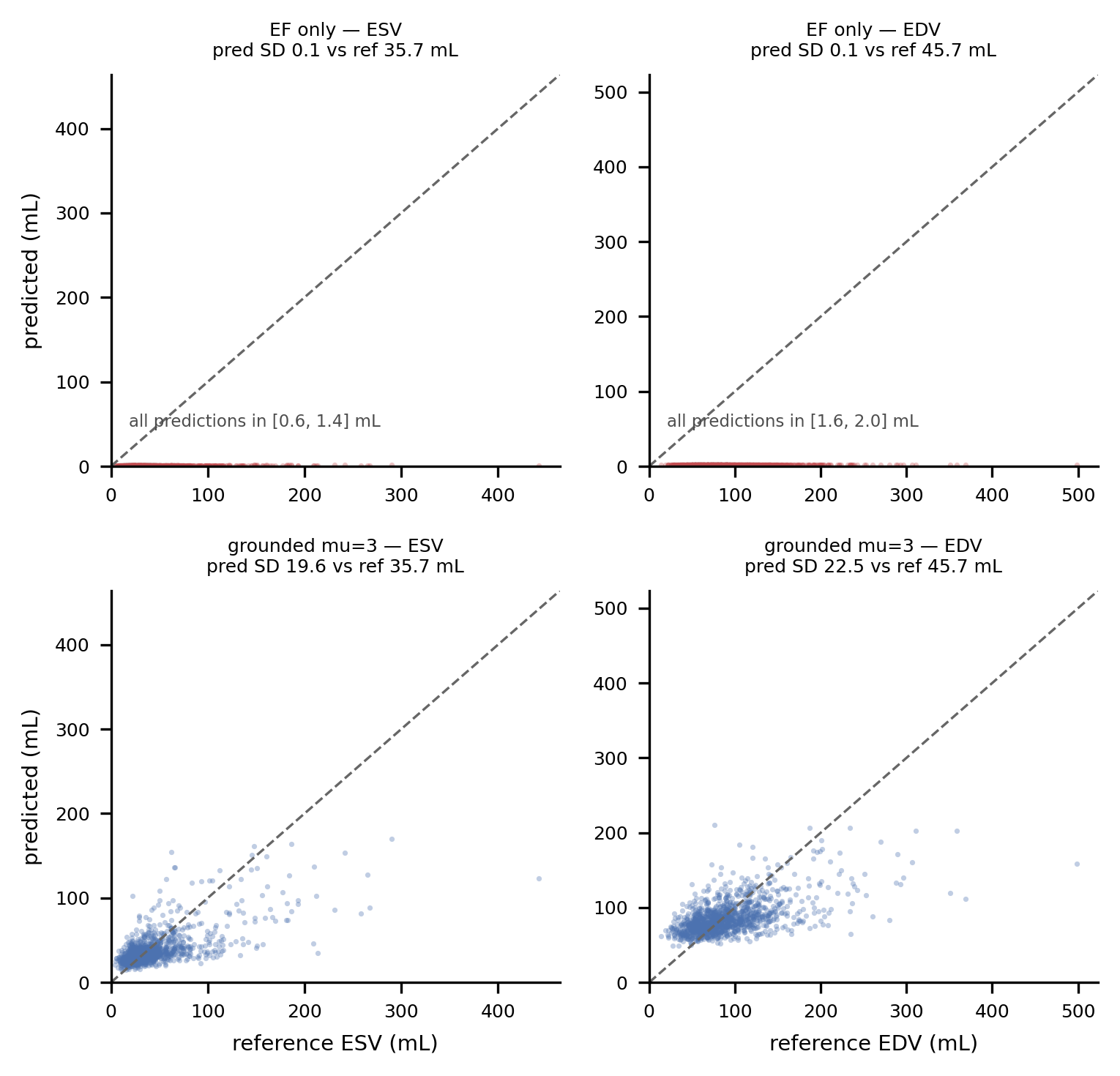}
\caption{Predicted against reference volumes on the test set ($n=1{,}276$).
\textbf{Top:} without volume supervision the concept layer produces almost
constant volumes (predicted SD 0.1~mL against reference SD 35.7 and 45.7~mL;
across all 1{,}276 studies the end-systolic predictions span 0.6--1.4~mL and the
end-diastolic predictions 1.6--2.0~mL), collapsing onto a narrow band along the
horizontal axis; the softplus output
constrains that collapse to small positive values rather than to an arbitrary
constant. Both rows share axis limits. \textbf{Bottom:} supervision in
millilitres restores the spread
(19.6 and 22.5~mL) and aligns the predictions with the identity line, though the
remaining slope shows that compression is reduced rather than removed. Dashed
line is identity.}
\label{fig:scatter}
\end{figure*}

\begin{table*}[t]
\caption{Concept Layer Grounding on the Test Set ($n=1{,}276$). Predicted SD Is the Quantity of Interest for Scale.}
\label{tab:concept}
\centering
\setlength{\tabcolsep}{6pt}
\begin{tabular}{lcccccc}
\toprule
& \multicolumn{3}{c}{End-systolic volume} & \multicolumn{3}{c}{End-diastolic volume} \\
\cmidrule(lr){2-4}\cmidrule(lr){5-7}
Condition & MAE (mL) & $r$ & pred.\ SD & MAE (mL) & $r$ & pred.\ SD \\
\midrule
Reference & --- & --- & \textbf{35.7} & --- & --- & \textbf{45.7} \\
CBM, EF only & 42.96 & 0.626 & \textbf{0.1} & 89.79 & $-0.219$ & \textbf{0.1} \\
CBM, grounded $\mu=3$ & 16.03 & 0.663 & 19.6 & 25.83 & 0.569 & 22.5 \\
CBM, normalized & 16.59 & 0.600 & 18.1 & 26.50 & 0.523 & 21.5 \\
CBM, grounded $\mu=10$ & 16.60 & 0.623 & 17.9 & 26.50 & 0.547 & 22.5 \\
\bottomrule
\end{tabular}
\end{table*}

\subsection{Grounding restores scale at a cost in EF accuracy}
Volume supervision reduced concept error by roughly two thirds (ESV
$42.96 \rightarrow 16.03$; EDV $89.79 \rightarrow 25.83$) and raised predicted
spread from 0.1 to 19.6 and 22.5~mL. EF error increased by 0.404
($6.891 \rightarrow 7.295$). This is a direct consequence of the corollary:
supervision that breaks the scale invariance also constrains the optimization of
a scale-invariant target.

\subsection{Increasing the grounding weight does not improve concept validity}
Concept errors were unchanged between $\mu=3$ and $\mu=10$ (ESV 16.03 vs.\
16.60; EDV 25.83 vs.\ 26.50) while EF error increased. Two runs at matched
learning rate differed by 0.13 in EF MAE, against a $\mu=3$ to $\mu=10$
difference of approximately 0.26. The direction is consistent but its magnitude
is on the order of twice the seed-to-seed variation, so we do not claim a
monotone relationship. The substantive finding is that a larger grounding weight
buys no additional concept validity.

\subsection{Optimization stability}
\begin{figure}[t]
\centering
\includegraphics[width=\linewidth]{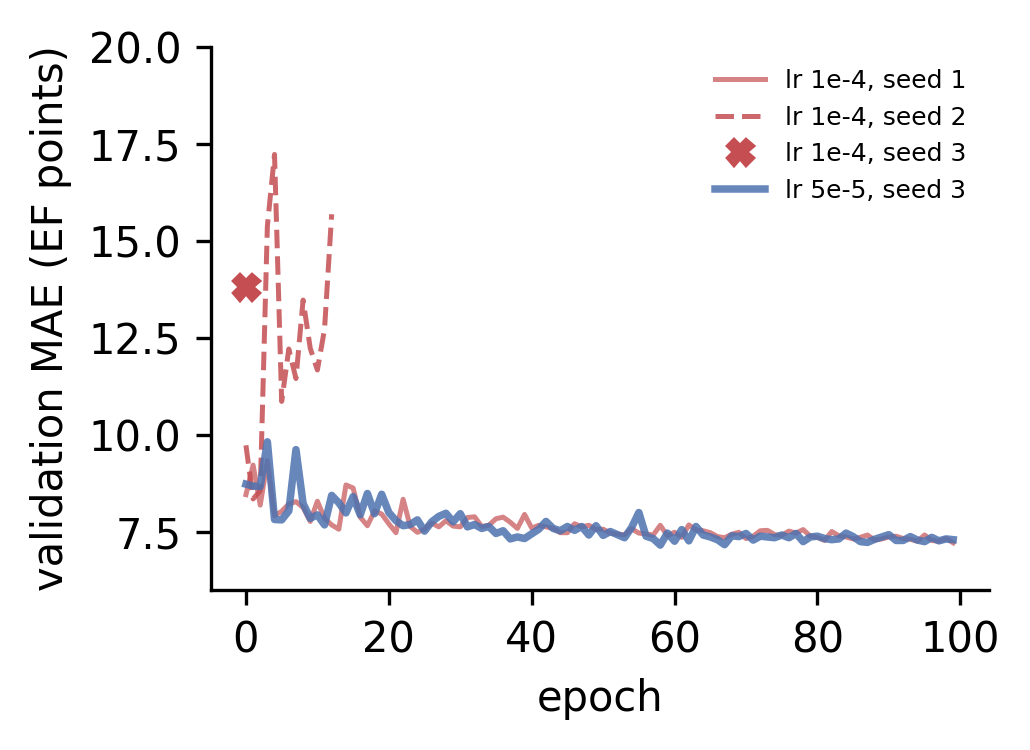}
\caption{Validation error during volume-grounded training. At a learning rate of
$10^{-4}$ two of three seeds diverged and were terminated; halving the learning
rate removed the instability without changing the final error materially.}
\label{fig:divergence}
\end{figure}

At a learning rate of $10^{-4}$, two of three seeds diverged: one reached a
validation MAE of 15.38 at epoch 3 and had risen to 17.22 by epoch 4 without
recovering, and one was already at 13.81 at epoch 0. Both were terminated once
divergence was evident, at epochs 13 and 1 respectively; the surviving seed
completed 100 epochs at a final validation MAE of 7.22. Halving the learning
rate to $5 \times 10^{-5}$ eliminated divergence, and that run completed 100
epochs at 7.30. Physical-unit supervision changes the loss scale substantially,
and we recommend reducing the learning rate accordingly
(Fig.~\ref{fig:divergence}).

\subsection{Compression is not resolved}
Even with grounding, predicted spread reached only 22.5~mL against a reference
of 45.7~mL, and this value was essentially identical across all three grounded
conditions. The residual compression is therefore not a matter of loss
weighting.

\section{Discussion}

\subsection{What is new}
Information loss in concept layers has been treated as an empirical deficiency
to be patched with residual pathways. We instead show that \emph{what} can be
lost is fixed in advance by the invariance group of the objective, and that the
corresponding remedy is invariance-breaking supervision rather than added
capacity.

It is worth separating this from concept completeness, which asks whether the
concept set is a sufficient statistic for the target. Our concept set is
complete in that sense: the two volumes determine EF exactly, and no
task-relevant information is missing from the bottleneck. The layer is
nonetheless uninterpretable in physical terms. Completeness and physical
grounding are therefore distinct axes, and a concept set can satisfy the first
while failing the second. It also follows that an evaluation reporting concept
error in normalized units cannot detect the failure, consistent with reports
that accuracy-driven evaluation can mask divergence from the intended reasoning
process~\cite{ezarlenga2023metrics}.

A second literature treats the opposite failure. Concept layers are widely
reported to encode more than their assigned concept, and both diagnostic
metrics~\cite{ezarlenga2023metrics} and information-bottleneck
regularizers~\cite{almudevar2025} have been proposed to remove the excess. That
mechanism is unavailable here: the label is an analytic function of two scalars,
so there is no capacity in which task-relevant nuisance could be carried past the
concept layer. The failure we report is the complement -- the layer carries too
little rather than too much -- and it persists in a bottleneck narrow enough that
leakage cannot account for it. The two lines do agree on one point: constraining
the concept layer, in either direction, is paid for in task
accuracy~\cite{almudevar2025}, which is what we observe when grounding is added.

Recent work on grounding concept bottlenecks in medical imaging has been
concerned with space. Spatially grounded models use coarse lesion delineations
to constrain where concept evidence may arise~\cite{sgcbm2026}, and related work
examines whether concept activations localize to the regions they
name~\cite{huang2024}. Such constraints answer \emph{where} a concept is read
from. They do not, however, address the degeneracy studied here: no restriction
on the spatial support of an activation can break an algebraic invariance of the
objective, because every member of the scale orbit is compatible with any given
spatial pattern. The grounding we require is dimensional rather than spatial --
not where the model looks, but in what units the concept layer reports. The two
are complementary constraints on a concept layer, and satisfying one says
nothing about the other.

\subsection{A scale hypothesis for the residual compression}
\label{sec:scale}
In a separate measurement set with recorded calibration, the physical pixel
spacing could be recovered exactly within a study (coefficient of variation
$2 \times 10^{-10}$) but varied between studies from 0.011 to 0.061~cm/px.
Decomposing this variation attributes 32\% to differences in stored resolution
and a coefficient of variation of 15\% to acquisition depth. Because
preprocessing resizes all inputs to a fixed grid, the physical scale is removed
from the input. The residual compression is consistent with this account,
although we do not establish it as the cause.

\subsection{Limitations}
All conditions in Tables~\ref{tab:ef} and~\ref{tab:concept} were trained for 100
epochs from the same fine-tuned encoder checkpoint, so the comparisons reported
here are at matched optimization budget.
An earlier round of experiments compared conditions at unequal budgets, and we
do not report those numbers; small differences in ejection fraction error
between grounded variants are within the range we observed across seeds and
should not be read as effects of the grounding weight.

Three limitations remain. The residual compression of predicted volume spread is
unexplained, and the account we offer in Section~\ref{sec:scale} is a hypothesis
rather than a demonstrated cause. Results come from a single dataset acquired at
one institution, and generalization to other scanners and acquisition protocols
is untested. Each configuration was run with two to three seeds, which is
sufficient to bound the differences we do claim but not to characterize the
optimization landscape.

\subsection{Generalization}
The argument is not specific to echocardiography. Any concept vocabulary that is
scale-invariant by construction---for example shape and margin descriptors---is
predicted to exhibit the same behaviour under an objective that does not
reference absolute size.

\section{Conclusion}
Routing echocardiographic ejection fraction through a volume concept layer costs
nothing in accuracy, but the concept layer that results carries almost no
physical scale. This is not an optimization failure to be patched with capacity:
it is determined in advance by the fact that ejection fraction is a ratio, and
therefore that the objective constrains the concept layer only up to a common
rescaling. Supervision in absolute units breaks the invariance and restores
grounding, at a measurable and modest cost in ejection fraction accuracy. The
broader implication for interpretable clinical models is that a concept layer
should be validated against what the objective is capable of determining, and
not against concept accuracy alone.

\section*{Declaration of Generative AI and AI-Assisted Technologies}
During the preparation of this work the authors used a commercially available
large language model to improve the language of the manuscript and to assist
with drafting and debugging analysis code. The tool was not used to generate
scientific claims, to interpret results, or to produce references: all citations
were verified by the authors against the original publications, all numerical
results were computed by code written and reviewed by the authors, and all
figures and tables were generated from source data by the authors. The authors
reviewed and edited the content as needed and take full responsibility for the
content of the article.

\section*{Data Availability}
This study uses the publicly available EchoNet-Dynamic
dataset~\cite{ouyang2020}, at \url{https://echonet.github.io/dynamic/}; access
requires acceptance of the Stanford AIMI data use agreement. Code to reproduce
the experiments will be released upon publication.

\end{document}